\documentclass[runningheads]{llncs}

\usepackage[T1]{fontenc}

\usepackage{amssymb}
\usepackage{amsfonts}
\usepackage{amsmath}
\usepackage{graphicx}
\usepackage{microtype}

\usepackage{color}
\usepackage{url}
\usepackage{hyperref}

\usepackage{orcidlink}
\usepackage{enumitem}
\usepackage{cite}

\usepackage{caption}
\usepackage{subcaption}
\usepackage{booktabs}
\usepackage{multirow}
\usepackage{siunitx}
\usepackage{tabularx}
\usepackage[table]{xcolor}
\usepackage{array}

\definecolor{lightgray}{gray}{0.95}

\usepackage{lmodern}
\usepackage{etoolbox}
\robustify\bfseries

\usepackage[ruled,linesnumbered,noend]{algorithm2e}
\SetKw{Continue}{continue}
\SetKw{Break}{break}

\providetoggle{beginFlag}
\settoggle{beginFlag}{true}

\SetAlgoSkip{SkipBeforeAndAfter}

\makeatletter
\def\@seccntformat#1{\@ifundefined{#1@cntformat}%
   {\csname the#1\endcsname\quad}
   {\csname #1@cntformat\endcsname}
}
\let\oldappendix\appendix
\renewcommand\appendix{%
    \oldappendix
    \newcommand{\section@cntformat}{\appendixname~\thesection\quad}
}
\makeatother

\AtBeginDocument{%
  \abovedisplayskip=6pt plus 2pt minus 2pt
  \belowdisplayskip=6pt plus 2pt minus 2pt
  \abovedisplayshortskip=3pt plus 1pt minus 1pt
  \belowdisplayshortskip=3pt plus 1pt minus 1pt
}

\makeatletter
\renewcommand\section{\@startsection{section}{1}{\z@}%
                       {-12\p@ \@plus -3\p@ \@minus -2\p@}%
                       {8\p@ \@plus 2\p@ \@minus 2\p@}%
                       {\normalfont\large\bfseries\boldmath
                        \rightskip=\z@ \@plus 8em\pretolerance=10000 }}
\renewcommand\subsection{\@startsection{subsection}{2}{\z@}%
                       {-10\p@ \@plus -3\p@ \@minus -2\p@}%
                       {5\p@ \@plus 2\p@ \@minus 2\p@}%
                       {\normalfont\normalsize\bfseries\boldmath
                        \rightskip=\z@ \@plus 8em\pretolerance=10000 }}
\renewcommand\subsubsection{\@startsection{subsubsection}{3}{\z@}%
                       {-10\p@ \@plus -3\p@ \@minus -2\p@}%
                       {-0.5em \@plus -0.22em \@minus -0.1em}%
                       {\normalfont\normalsize\bfseries\boldmath}}
\makeatother

\usepackage{tikz}
\usepackage{xcolor}

\definecolor{solidblue}{HTML}{8cc5e3}
\definecolor{dashedblue}{HTML}{2066a8}
\definecolor{dasheddottedblue}{HTML}{3594cc}

\definecolor{solidred}{HTML}{cc3333}
\definecolor{variationalblue}{HTML}{2066a8}
\definecolor{dottedgray}{HTML}{808080}

\DeclareRobustCommand{\inlinesolidblue}{\tikz[baseline=-0.5ex]{\draw[solidblue, line width=1pt] (0,0) -- (0.4,0);}}
\DeclareRobustCommand{\inlinedashedblue}{\tikz[baseline=-0.5ex]{\draw[dashedblue, line width=1pt, dashed] (0,0) -- (0.4,0);}}
\DeclareRobustCommand{\inlinedasheddottedblue}{\tikz[baseline=-0.5ex]{\draw[dasheddottedblue, line width=1pt, dash dot] (0,0) -- (0.4,0);}}
\DeclareRobustCommand{\inlinesolidred}{\tikz[baseline=-0.5ex]{\draw[solidred, line width=1pt] (0,0) -- (0.4,0);}}
\DeclareRobustCommand{\inlinevariationalblue}{\tikz[baseline=-0.5ex]{\draw[variationalblue, line width=1pt, dashed] (0,0) -- (0.4,0);}}
\DeclareRobustCommand{\inlinedottedgray}{\tikz[baseline=-0.5ex]{\draw[dottedgray, line width=1pt, dotted] (0,0) -- (0.4,0);}}

\newcommand{\M}{\mathcal{M}}
\newcommand{\TqM}{\mathcal{T}_{q}\M}
\newcommand{\TM}{\mathcal{T}\M}
\newcommand{\D}{\mathrm{D}}
\newcommand{\R}{\mathcal{R}}
\newcommand{\qrand}{q_{\text{rand}}}
\newcommand{\qnear}{q_{\text{near}}}

\NewDocumentCommand\T{}{\mathsf{T}}
\NewDocumentCommand\Transpose{m}{ \left.{#1}\right.^\T }
\NewDocumentCommand\LieGroupSE{m}{ \mathrm{SE}(#1) }
\NewDocumentCommand\Inv{m}{{#1}^{-1}}
\NewDocumentCommand\Norm{m}{ \left\Vert#1\right\Vert }
\NewDocumentCommand\ArgMin{m}{ \operatorname*{argmin}_{#1} }
\DeclareMathOperator{\grad}{grad}

\begin{document}

\title{Geometry-Aware Sampling-Based Motion Planning on Riemannian Manifolds}
\titlerunning{Sampling-Based Motion Planning on Riemannian Manifolds}

\author{Phone Thiha Kyaw\orcidlink{0000-0001-8790-8350} \and
Jonathan Kelly\orcidlink{0000-0002-5528-6136}}
\authorrunning{P. T. Kyaw and J. Kelly}
\institute{Institute for Aerospace Studies, University of Toronto, Toronto, Canada\\
\email{\{phone.thiha,jonathan.kelly\}@robotics.utias.utoronto.ca}}
\maketitle

\begin{abstract}
In many robot motion planning problems, task objectives and physical constraints induce non-Euclidean geometry on the configuration space, yet many planners operate using Euclidean distances that ignore this structure.
We address the problem of planning collision-free motions that minimize length under configuration-dependent Riemannian metrics, corresponding to geodesics on the configuration manifold.
Conventional numerical methods for computing such paths do not scale well to high-dimensional systems, while sampling-based planners trade scalability for geometric fidelity.
To bridge this gap, we propose a sampling-based motion planning framework that operates directly on Riemannian manifolds.
We introduce a computationally efficient midpoint-based approximation of the Riemannian geodesic distance and prove that it matches the true Riemannian distance with third-order accuracy.
Building on this approximation, we design a local planner that traces the manifold using first-order retractions guided by Riemannian natural gradients.
Experiments on a two-link planar arm and a 7-DoF Franka manipulator under a kinetic-energy metric, as well as on rigid-body planning in $\LieGroupSE{2}$ with non-holonomic motion constraints, demonstrate that our approach consistently produces lower-cost trajectories than Euclidean-based planners and classical numerical geodesic-solver baselines.
\keywords{Motion and Path Planning \and Sampling-Based Algorithms \and Riemannian Geometry \and Geodesics \and Nonholonomic Planning}
\end{abstract}

\section{Introduction}
\label{sec:introduction}

Robotic motion planning is often posed as a search for collision-free paths through a configuration space.
For many robotic systems, the configuration space is a non-Euclidean manifold.
For example, rigid-body poses live on Lie groups such as $\LieGroupSE{2}$ and $\LieGroupSE{3}$, while articulated manipulators live on products of circles (tori).
More generally, closed-chain or task constraints induce implicit manifolds embedded in a higher-dimensional ambient space~\cite{lavalle2006planning}.
In these settings, planning feasibility and optimality have clear geometric interpretations: feasibility is governed by the intrinsic manifold structure and constraints, while optimality depends on how we measure the cost of motion along the manifold.

A natural way to encode such costs is via a Riemannian metric on the appropriate configuration manifold.
This metric induces a notion of distance on the manifold, and the resulting shortest paths, called geodesics, generalize straight lines in Euclidean spaces to curved manifolds~\cite{lee2018introduction}.
This abstraction recovers classical shortest-path planning in Euclidean configuration spaces as a special case, while also capturing costs that vary smoothly with configuration.
Examples of configuration-dependent metrics include the kinetic-energy metric commonly used in manipulation~\cite{bullo2019geometric,jaquier2022riemannian,li2024riemannian}, as well as group-invariant metrics on Lie groups~\cite{milnor1976curvatures, helgason1979differential}.
In these settings, optimal motion planning becomes the problem of finding collision-free paths that minimize Riemannian arc length under either a constant or smoothly varying metric.

\begin{figure}[!t]
\centering
\vspace{-0.55\baselineskip}
\includegraphics[width=0.89\textwidth]{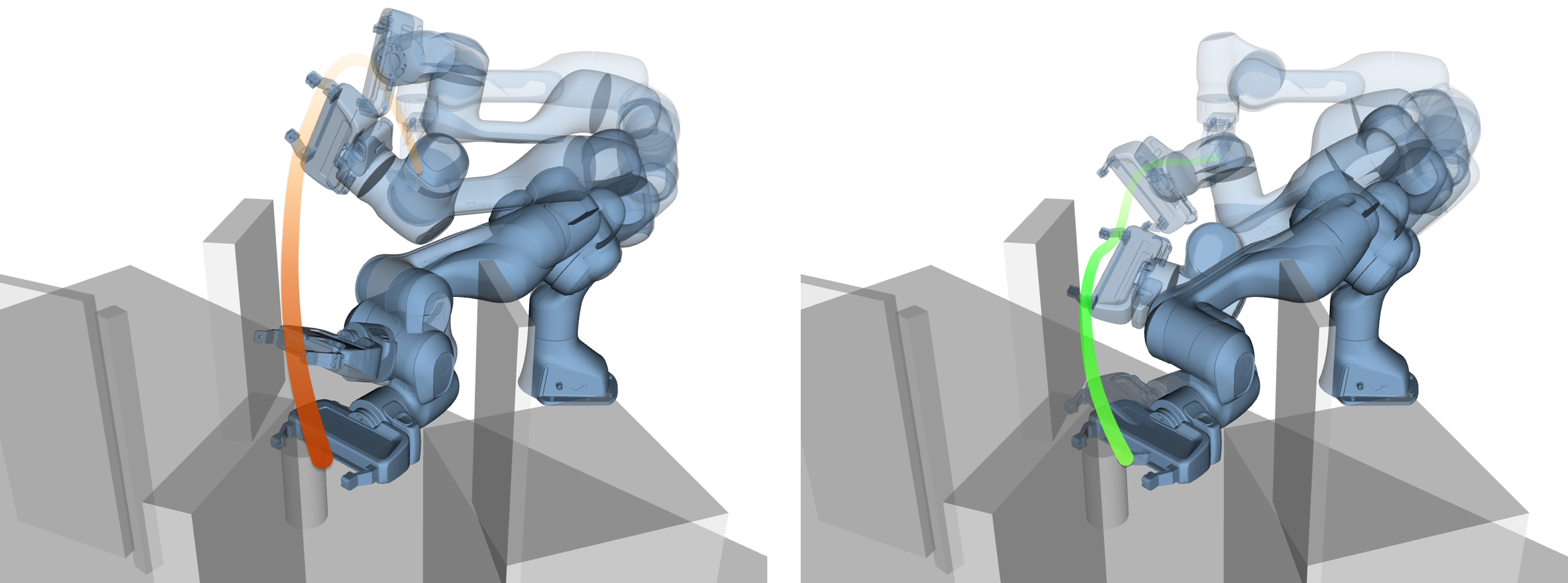}
\vspace{-0.4\baselineskip}
\caption{
Geodesic motion planning for a 7-DoF Franka manipulator in a cluttered environment.
Conventional sampling-based methods compute joint-space shortest paths under the ambient Euclidean metric (red), often ignoring the intrinsic geometry of the configuration manifold (left).
Our geodesic formulation instead recovers minimum-energy trajectories under the kinetic-energy Riemannian metric (green), with path thickness indicating relative energy consumption (right).
}
\vspace{-1.25\baselineskip}
\label{fig:franka}
\end{figure}

\looseness=-1
We study the problem of computing minimum-length geodesics on Riemannian manifolds arising in robotic motion planning.
Finding such paths in high dimensions is computationally expensive.
Classical approaches either directly solve the geodesic ordinary differential equations as a boundary value problem or minimize a variational energy functional~\cite{peyre2010geodesic}.
While effective in low dimensions, these methods scale poorly to high-dimensional robotic systems, struggling to satisfy feasibility and optimality while respecting constraints like joint limits and obstacles in real-time.
Sampling-based motion planning offers a scalable alternative.
Algorithms such as the Rapidly-exploring Random Tree (RRT) and its anytime variants have been widely adopted in robotic motion planning and shown to scale well in high dimensions~\cite{lavalle2001randomized, lavalle2006planning, karaman2011sampling}.
Most existing sampling-based approaches measure distances and interpolate motions using an ambient Euclidean metric.
Consequently, they may ignore the intrinsic geometry of the configuration manifold, leading to motions that violate manifold constraints or are infeasible.
Constrained motion planning techniques combine numerical continuation with sampling-based planning to address these feasibility problems by searching directly on the manifold~\cite{henderson2002multiple, jaillet2012path, kingston2018sampling}.
However, optimality is usually expressed through a fixed Euclidean metric, leaving a gap between manifold-aware feasibility and metric-aware optimality.

\looseness=-1
In this work, we develop geometry-aware subroutines that allow anytime sampling-based planners to optimize Riemannian path length directly on configuration manifolds.
This generalizes prior approaches by supporting both constant and smoothly varying Riemannian metrics (Figure~\ref{fig:franka}).
Our contributions are summarized as follows.
\vspace{-0.5\baselineskip}
\begin{itemize}
\item We propose a midpoint-based approximation of the Riemannian geodesic distance and prove that its approximation error vanishes asymptotically with third-order accuracy.
\item We design a geometry-aware local planner that traces the manifold using
retraction steps along the Riemannian natural gradient under a configuration-dependent metric.
\item We empirically validate our approach on energy-minimizing motion planning problems for serial manipulators and on non-holonomic planning tasks on $\LieGroupSE{2}$, demonstrating consistently lower-cost paths compared to baselines.
\end{itemize}
\section{Related Work}
\label{sec:related_work}

Robot motion generation algorithms often optimize smoothness criteria, such as minimum jerk or acceleration, inspired by human arm movements (see \cite{todorov2004optimality} for a review).
While effective in certain contexts, these models typically neglect the nonlinear coupling and dynamics that are intrinsic to articulated systems like humans and (most) robots.
As a result, linear methods based on Euclidean geometry often fail to capture the true system behaviour, leading to trajectories that are dynamically inconsistent or infeasible.
In contrast, geometric methods provide an alternative formulation that more faithfully reflects the underlying nonlinear dynamics, for example by leveraging tools from differential geometry to analyze the mechanics of motor control~\cite{handzel1999geometric,flash2007affine,flash2018motor}.
Building on this perspective, Bullo and Lewis~\cite{bullo2019geometric} model the configuration space of a multi-linked system as a Riemannian manifold whose metric encodes the system's structural and dynamic properties.
Natural motions then follow geodesics that minimize intrinsic costs such as muscular effort~\cite{biess2007computational,sekimoto2009observation} or variations in joint torque~\cite{biess2011riemannian}.
Riemannian metrics have also been proposed to optimize additional criteria, including manipulability~\cite{jaquier2021geometry}, joint stiffness~\cite{saveriano2023learning}, and distance to kinematic singularities~\cite{maric2021riemannian}.
Motivated by these geometric insights, our approach frames the robot motion planning problem as a search for collision-free, minimum-length geodesics on Riemannian manifolds defined over high-dimensional configuration spaces.

To compute such minimum-length geodesics, one classical approach is to minimize an energy functional between two joint configurations, resulting in a boundary value problem defined by the Euler--Lagrange equations~\cite{peyre2010geodesic,shao2018riemannian}.
Although this yields accurate solutions, it is computationally expensive and often impractical in high-dimensional configuration spaces.
As a result, several approximate methods have been proposed to estimate geodesic distances on manifolds.
One representative approach solves the Eikonal equation using the Fast Marching Method, which estimates geodesic distances by propagating wavefronts over the manifold geometry~\cite{sethian1996fast,mirebeau2017anisotropic}.
Other techniques include the heat method~\cite{crane2013geodesics} and shortest-path computations on discretized meshes~\cite{surazhsky2005fast}.
Though less expensive than direct geodesic solvers, these methods rely on explicit manifold discretization or mesh-based representations, limiting their applicability to general high-dimensional robotic configuration spaces.

\looseness=-1
In contrast to techniques that rely on explicit discretization of the configuration space, sampling-based planners such as RRT~\cite{lavalle2001randomized} and RRT*~\cite{karaman2011sampling} scale well to high-dimensional spaces.
However, standard implementations typically rely on Euclidean metrics for distance computations, often ignoring the underlying geometry of the configuration space.
While constrained motion planning methods ensure feasibility by employing projection or retraction operators to maintain samples on implicitly defined manifolds~\cite{henderson2002multiple,csucan2012motion,jaillet2012path,kim2016tangent,kingston2018sampling,kingston2019exploring}, they generally still use the ambient metric from the configuration space (e.g., Euclidean distance) to measure the distances between samples.
Some methods approximate geodesic paths through sampling~\cite{salzman2013motion}, while others use Riemannian metrics to guide sampling~\cite{wilmarth1999motion}.
More recent approaches, such as RRT*-R, incorporate Riemannian metrics into planning on low-dimensional submanifolds~\cite{zhang2025rrt}.
Our method generalizes these approaches by integrating geometry-aware subroutines into sampling-based planners, ensuring that resulting paths are not only feasible with respect to manifold constraints but also asymptotically optimal with respect to the intrinsic Riemannian metric.

Several methods exploit geometric structure by designing or reshaping Riemannian metrics to encode constraints or task-relevant features.
Under this formulation, the cost of moving between configurations is expressed through a Riemannian metric on joint space, capturing the geometry of the configuration manifold~\cite{chen2016improved,laux2021robot,laux2023boundary}.
Metrics can also blend information from both joint and task spaces.
This is especially useful when dealing with task-space constraints such as obstacle avoidance.
Rather than treating these as cost terms or hard constraints, one can define metrics in task space that reflect obstacle geometry and pull them back to configuration space, yielding geodesics that naturally curve around obstacles~\cite{ratliff2015understanding,mainprice2016warping,mainprice2020interior,bao2025geodesic}.
This geometric reformulation has been shown to improve convergence in motion optimization methods such as CHOMP~\cite{ratliff2009chomp}.

\looseness=-1
Beyond reshaping metrics for single objectives, Riemannian geometry also provides a natural mechanism for blending several distinct motion objectives or constraints.
In \cite{ratliff2018riemannian,cheng2021rmpflow}, the authors propose combining multiple motion policies, each representing a different robot behaviour, into a single, geometrically consistent policy using Riemannian pullback operations.
Similar pullback operations have been used to reshape the configuration space metric based on barrier functions, allowing constraints like joint limits, self-collisions, and obstacles to be naturally encoded~\cite{beik2023reactive,klein2023design}.
In contrast to these approaches, we do not reshape the metric.
Instead, we work directly with the intrinsic Riemannian metric and minimize the curve length based on it, rather than minimizing the energy functional.
Constraints such as obstacle avoidance are handled implicitly through the sampling-based planner, without being explicitly encoded in the metric.
This approach allows us to leverage all the benefits of sampling-based planning while producing collision-free, minimum-length motions without requiring any handcrafting in the metric design.
\section{Preliminaries}
\label{sec:preliminaries}

\looseness=-1
This section presents a minimal set of tools from differential geometry required to follow the development in this work. For a more in-depth treatment of these concepts, we refer the reader to the following textbooks~\cite{lee2012smooth,lee2018introduction,boumal2023introduction}.

\subsection{Riemannian Metrics and Manifolds}
\label{subsec:riemannian_metrics_and_manifolds}

\looseness=-1
Let $\M$ be an $n$-dimensional manifold embedded in an ambient linear space $\mathcal{E}$ (e.g., $\M \subseteq \mathbb{R}^d$ with $d \ge n$).
By definition, $\M$ is a topological space that is locally Euclidean, meaning that each point $q \in \M$ has a neighborhood homeomorphic to an open subset of $\mathbb{R}^n$.
The tangent space $\TqM$ at a point $q \in \M$ consists of the velocity vectors of all smooth curves on $\M$ passing through $q$.
When $\M$ is embedded in $\mathcal{E}$, the tangent space $\TqM$ may be identified with a linear subspace of $\mathcal{E}$.
The collection of all tangent spaces forms the tangent bundle, defined as the disjoint union
\[
\TM = \bigl\{ (q, v) : q \in \M \; \text{and} \; v \in \TqM \bigr\}.
\]
Since each tangent space is a vector space, we can define an inner product on $\TqM$, given by a bilinear, symmetric and positive definite map ${\langle \cdot, \cdot \rangle}_q : \TqM \times \TqM \rightarrow \mathbb{R}$.
This inner product induces a norm $\smash{\Norm{v}_{q} = {\langle v, v \rangle}^{1/2}_{q}}\hspace{0.05em}$ on tangent vectors.
A Riemannian metric $G_{q}$ on $\M$ is a smoothly varying choice of such inner products for each $q \in \M$ acting on $\TM$.
In local coordinates, the Riemannian inner product between two tangent vectors $u, v \in \TqM$ can be written as
\[
{\langle u, v \rangle}_{q} = \Transpose{u} G_{q} v,
\]
where $G_q$ is a symmetric positive definite matrix representing the metric at $q$.
A manifold $\M$ equipped with a Riemannian metric is called a Riemannian manifold.

\subsection{Distances and Geodesics}
\label{subsec:distances_and_geodesics}

Given a Riemannian manifold $\M$, the metric induces a natural notion of length for smooth curves on $\M$, and consequently defines a distance function, making $\M$ into a metric space.
For a piecewise smooth curve $\pi : [0, 1] \rightarrow \M$, its length is defined as the integral of its speed under the Riemannian metric,
\begin{equation*}
\label{eqn:arc_length}
    L(\pi) = \int_0^1 \Norm{\dot{\pi}(t)}_{\pi(t)} \, dt
    = \int_0^1 \sqrt{ \Transpose{\dot{\pi}(t)} G_{\pi(t)} \, \dot{\pi}(t) } \, dt.
\end{equation*}
This induces the \emph{Riemannian distance} between two points on the manifold
\begin{equation}
\label{eqn:distance_function}
    d_{\M}(q_x, q_y) = \inf_{\pi} L(\pi),
\end{equation}
where the infimum is taken over all piecewise smooth curves $\pi$ such that $\pi(0) = q_x$ and $\pi(1) = q_y$.
Geodesics are curves that locally minimize this distance, generalizing the notion of straight lines in Euclidean space.
Equivalently, geodesics between two fixed endpoints minimize the curve energy
\begin{equation}
\label{eqn:energy}
E(\pi) = \frac{1}{2} \int_0^1 \Transpose{\dot{\pi}(t)} G_{\pi(t)} \, \dot{\pi}(t) \, dt.
\end{equation}
They satisfy the geodesic ordinary differential equation (ODE)
\begin{equation}
\label{eqn:geodesic_equation}
\ddot{q}^k(t) + \Gamma^{k}_{ij}(q(t)) \, \dot{q}^i(t) \, \dot{q}^j(t) = 0, \quad i,j,k \in \{1, \dots, n\},
\end{equation}
where $q(t) = ( q^{1}(t), \dots, q^{n}(t) )$ denotes the local coordinate representation of $\pi(t)$.\footnote{We use the Einstein summation convention here.}
Here, $\Gamma^{k}_{ij}$ are the Christoffel symbols of the second kind, computed directly from the Riemannian metric and given by
\begin{equation}
\label{eqn:christoffel}
\Gamma^{k}_{ij} = \frac{1}{2} \, G^{kl} \bigl( \frac{\partial G_{il}}{\partial q^{j}} + \frac{\partial G_{jl}}{\partial q^{i}} - \frac{\partial G_{ij}}{\partial q^{l}} \bigr),
\end{equation}
which describe how the coordinate basis varies across the manifold.
Computing geodesics from (\ref{eqn:geodesic_equation}) requires solving a boundary value problem, which becomes computationally challenging in high-dimensional spaces.
For this reason, rather than solving the geodesic ODEs directly, we consider minimizing the curve length in (\ref{eqn:distance_function}) under the intrinsic Riemannian metric (see Section~\ref{subsec:motion_planning_problem_formulation}).

\subsection{Motion Planning Problem Formulation}
\label{subsec:motion_planning_problem_formulation}

Let $\mathcal{Q} \subseteq \M$ denote the configuration manifold (for example, a constrained lower-dimensional submanifold), with each $q \in \mathcal{Q}$ representing a configuration of the system.
In this work, we restrict attention to the unconstrained case $\mathcal{Q} = \M$, while retaining the notation $\mathcal{Q}$ for generality.
Let $\mathcal{Q}_{\text{obs}} \subsetneq \mathcal{Q}$ denote the set of configurations in collision with obstacles.
The obstacle-free configuration space is defined as $\mathcal{Q}_{\text{free}} = \mathcal{Q} \setminus \mathcal{Q}_{\text{obs}}$.
Given an initial configuration $q_{\text{start}}$ and a goal configuration $q_{\text{goal}}$, the objective is to find a feasible path $\pi^*$ in $\mathcal{Q}_{\text{free}}$ that minimizes the Riemannian length objective given in  (\ref{eqn:distance_function}):
\begin{equation}
\label{eqn:optimal_planning}
\pi^* = \ArgMin{\pi \in \Sigma} 
\Big\{ L(\pi) \;\Big|\; 
\pi(0)=q_{\text{start}},\ 
\pi(1)=q_{\text{goal}},\ 
\pi(t) \in \mathcal{Q}_{\text{free}},\ \forall t \in [0,1] 
\Big\},
\end{equation}
where $\Sigma$ denotes the set of all piecewise smooth feasible paths.

\looseness=-1
In this work, we address the planning problem in (\ref{eqn:optimal_planning}) by adopting sampling-based motion planning methods.
Specifically, we search for solutions by incrementally constructing a rapidly-exploring random tree through random sampling, and refining it via incremental rewiring to asymptotically approach shortest paths in $\mathcal{Q}_{\text{free}}$ \cite{lavalle2001randomized,karaman2011sampling}.
This strategy enables scalability to high-dimensional configuration spaces while respecting the intrinsic geometry of the manifold.
Because the Riemannian metric generally distorts the space, shortest paths are no longer straight lines in the ambient space $\mathcal{E}$ (e.g., $\mathbb{R}^d$), and it is therefore critical that distance computations and interpolations within the planner remain consistent with the underlying manifold structure.
We address geometric consistency in Section~\ref{sec:method}.
\section{Geodesic Motion Planning on Riemannian Manifolds}
\label{sec:method}

This section develops the geometry-aware subroutines that enable sampling-based motion planning algorithms to find geodesic paths on Riemannian manifolds.
We first introduce a computationally efficient approximation of geodesic distance between configurations (Section~\ref{sec:distance-between-configurations}).
We then describe a gradient-based interpolation procedure that uses this distance to extend the search tree while respecting the underlying geometry of the manifold (Section~\ref{sec:vertex-expansion}).

\subsection{Distance Between Configurations}
\label{sec:distance-between-configurations}

The distance between two configurations on a Riemannian manifold is defined as the length of the shortest connecting curve under the intrinsic metric~\eqref{eqn:distance_function}.
Exact evaluation of this distance requires solving a geodesic boundary value problem, which is impractical for online use due to the repeated calls required by nearest-neighbour queries and tree rewiring in sampling-based planners.
Accordingly, we approximate the geodesic distance by evaluating the Riemannian metric at the midpoint of the two configurations and computing the length of a piecewise path in the midpoint tangent space.
This approximation is computationally `cheap' since it requires only a single metric evaluation per subroutine call.
We later show that this midpoint-based approximation converges to the true geodesic distance as the configurations become arbitrarily close to each other (Theorem~\ref{thm:accuracy}).

\begin{figure}[!tb]
\centering
\vspace{-0.4\baselineskip}
\includegraphics[width=0.95\textwidth]{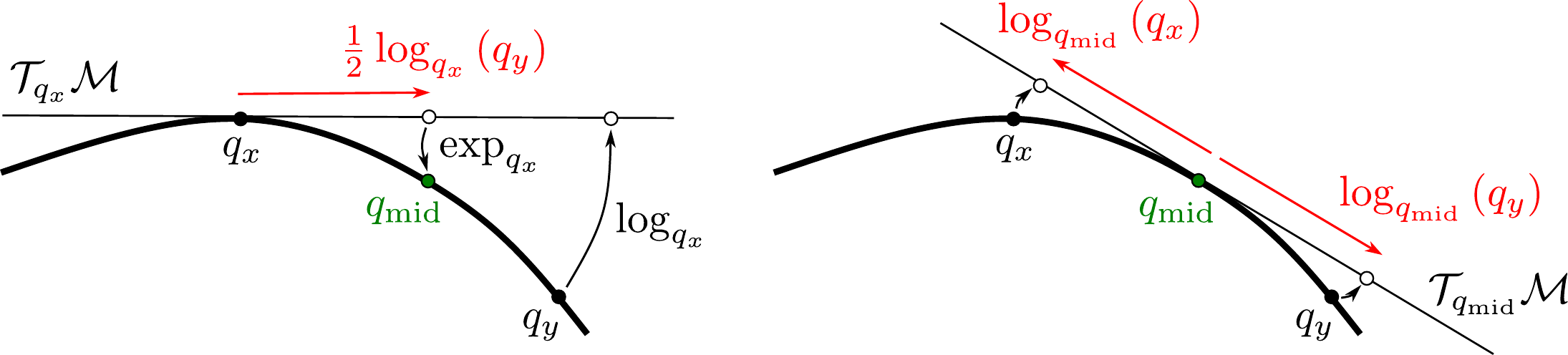}
\vspace{-0.5\baselineskip}
\caption{
\looseness=-1
Midpoint-based geodesic distance between configurations $q_x$ and $q_y$ on the manifold $\M$.
The geodesic midpoint $q_{\textrm{mid}}$ is constructed by interpolating halfway along the tangent vector connecting $q_x$ and $q_y$ in either of their tangent spaces and mapping the result back to the manifold (left).
The distance is then computed in the tangent space at $q_{\textrm{mid}}$ as the Riemannian norm of the difference between corresponding tangent vectors (right).
}
\vspace{-1.25\baselineskip}
\label{fig:midpoint-distance}
\end{figure}

We begin by analyzing the geometry of the geodesic midpoint and establish that this formulation yields an exact distance identity.
\begin{definition}[Geodesic Midpoint]
\label{def:geodesic-midpoint}
\looseness=-1
Let $ \mathcal{M}$ be a Riemannian manifold and let $q_x, q_y \in \mathcal{M}$ denote two configurations (points) on the manifold.
Assuming $q_x$ and $q_y$ lie within a geodesically convex neighbourhood, the geodesic midpoint $q_{\mathrm{mid}}$ is constructed by mapping half the geodesic distance from $q_x$ towards $q_y$
\begin{equation*}
    q_{\mathrm{mid}} = \exp_{q_x} \left(\frac{1}{2} \log_{q_x} \left(q_y \right) \right).
\end{equation*}
By \cite[Theorem 6.17]{lee2018introduction}, $q_{\mathrm{mid}}$ lies on the unique minimizing geodesic $\pi:[0,1]\to\mathcal{M}$ from $q_x$ to $q_y$ such that $\pi(0)=q_x$, $\pi(1)=q_y$, and $\pi(1/2)=q_{\mathrm{mid}}$.
\end{definition}
This construction is particularly useful since it provides a symmetric reference frame (Figure~\ref{fig:midpoint-distance}).
As the following lemma shows, projecting both configurations into the tangent space of the midpoint yields a symmetric distance relation.
\begin{lemma}
The Riemannian distance between $q_x$ and $q_y$ satisfies the identity
\begin{equation}
\label{eqn:midpoint-distance}
d_{\mathcal{M}}(q_x, q_y) = \left\| \log_{q_{\mathrm{mid}}}(q_y) - \log_{q_{\mathrm{mid}}}(q_x) \right\|_{q_{\mathrm{mid}}},
\end{equation}
where $\log_{q_{\mathrm{mid}}}(\cdot)$ denotes the Riemannian logarithmic map on the tangent space $\mathcal{T}_{q_{\textrm{mid}}}\M$ and $\Norm{\cdot}_{q_{\mathrm{mid}}}$ denotes the norm induced by the metric at the midpoint.
\end{lemma}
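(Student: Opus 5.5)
Since $q_x$ and $q_y$ lie in a geodesically convex neighbourhood, we use the unique minimizing geodesic $\pi:[0,1]\to\M$ from Definition~\ref{def:geodesic-midpoint}, with $\pi(0)=q_x$, $\pi(1/2)=q_{\mathrm{mid}}$ and $\pi(1)=q_y$. The plan is to reparametrize this geodesic so that it starts at the midpoint. Set $\gamma(s)=\pi(1/2+s)$ for $s\in[-1/2,1/2]$. Because the geodesic equation~\eqref{eqn:geodesic_equation} is invariant under affine reparametrization, $\gamma$ is a geodesic with $\gamma(0)=q_{\mathrm{mid}}$ and initial velocity $v:=\dot\gamma(0)=\dot\pi(1/2)\in\mathcal{T}_{q_{\mathrm{mid}}}\M$. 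By the definition of the exponential map, $\gamma(s)=\exp_{q_{\mathrm{mid}}}(sv)$. In particular,
\[
q_y=\exp_{q_{\mathrm{mid}}}\bigl(\tfrac12 v\bigr), \qquad q_x=\exp_{q_{\mathrm{mid}}}\bigl(-\tfrac12 v\bigr).
\]

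The next step is to invert these relations. We need $\log_{q_{\mathrm{mid}}}(q_y)=\tfrac12 v$ and $\log_{q_{\mathrm{mid}}}(q_x)=-\tfrac12 v$. This is the main point requiring care, because $\log_{q_{\mathrm{mid}}}$ is only well defined as the inverse of $\exp_{q_{\mathrm{mid}}}$ on a domain where the exponential map is a diffeomorphism. We address it through the convexity assumption, as in the definition. Since $q_{\mathrm{mid}}$ lies in the same geodesically convex (hence totally normal) neighbourhood $U$, each of $q_x$ and $q_y$ is joined to $q_{\mathrm{mid}}$ by a unique minimizing geodesic inside $U$, namely a sub-arc of $\pi$. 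The logarithm returns exactly the initial velocity of this arc, by \cite[Theorem 6.17]{lee2018introduction} and the properties of normal neighbourhoods. To stay safe, we state explicitly the standing assumption that $U$ is contained in a normal ball around each of its points, which is what geodesic convexity guarantees for sufficiently small $U$.

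With the two logarithms identified, the conclusion is direct:
\[
\bigl\|\log_{q_{\mathrm{mid}}}(q_y)-\log_{q_{\mathrm{mid}}}(q_x)\bigr\|_{q_{\mathrm{mid}}}=\|v\|_{q_{\mathrm{mid}}}.
\]
It remains to show that $\|v\|_{q_{\mathrm{mid}}}=d_\M(q_x,q_y)$. Geodesics have constant speed, since $\tfrac{d}{dt}\langle\dot\pi,\dot\pi\rangle=0$ by metric compatibility of the Levi-Civita connection. Hence
\[
L(\pi)=\int_0^1\|\dot\pi(t)\|_{\pi(t)}\,dt=\|\dot\pi(1/2)\|_{q_{\mathrm{mid}}}=\|v\|_{q_{\mathrm{mid}}}.
\]
Because $\pi$ is minimizing, $L(\pi)=d_\M(q_x,q_y)$ by~\eqref{eqn:distance_function}, which gives the identity~\eqref{eqn:midpoint-distance}.
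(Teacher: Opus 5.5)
Your proof is correct and matches the paper's argument: you reparametrize the minimizing geodesic from the midpoint, identify $\log_{q_{\mathrm{mid}}}(q_y)$ and $\log_{q_{\mathrm{mid}}}(q_x)$ as $\pm\tfrac12\dot\pi(1/2)$, and then use constant speed plus minimality to equate $\Norm{\dot\pi(1/2)}_{q_{\mathrm{mid}}}$ with $d_{\M}(q_x,q_y)$. Your explicit standing assumption that the neighbourhood is totally normal, which guarantees the logarithms really are the initial velocities of the sub-arcs, adds a bit of rigor that the paper leaves implicit.
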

\begin{proof}
By Definition~\ref{def:geodesic-midpoint}, $\pi$ is the unique minimizing geodesic connecting $q_x$ to $q_y$ within a geodesically convex neighbourhood on $\M$.
Since $\pi$ is a geodesic, the curve segment connecting $q_{\mathrm{mid}}$ to $q_y$ is also a geodesic, given by the reparameterization $\pi_y(t) = \pi(\tfrac{1}{2} + \tfrac{t}{2})$ for all $t\in[0,1]$.
The value of the Riemannian logarithmic map at $q_{\mathrm{mid}}$ is precisely the initial velocity of this segment
\begin{equation*}
\log_{q_{\mathrm{mid}}}(q_y) = \dot\pi_y(0) = \frac{1}{2}\,\dot\pi(1/2).
\end{equation*}
Similarly, the segment connecting $q_{\mathrm{mid}}$ to $q_x$ is given by $\pi_x(t) = \pi(\tfrac{1}{2} - \tfrac{t}{2})$, yielding
\begin{equation*}
\log_{q_{\mathrm{mid}}}(q_x) = \dot\pi_x(0) = -\frac{1}{2}\,\dot\pi(1/2).
\end{equation*}
Subtracting the two tangent vectors yields
\begin{equation*}
\log_{q_{\mathrm{mid}}}(q_y) - \log_{q_{\mathrm{mid}}}(q_x) =
\frac{1}{2}\,\dot\pi(1/2) - \left(-\frac{1}{2}\,\dot\pi(1/2)\right) =
\dot\pi(1/2).
\end{equation*}
Taking the norm with respect to the Riemannian metric at $q_{\mathrm{mid}}$, we obtain
\begin{equation*}
\Norm{\log_{q_{\mathrm{mid}}}(q_y) - \log_{q_{\mathrm{mid}}}(q_x)}_{q_{\mathrm{mid}}} =
\Norm{\dot\pi(1/2)}_{q_{\mathrm{mid}}}.
\end{equation*}
Since $\pi$ is a minimizing geodesic, it has constant speed, implying that $\Norm{\dot\pi(t)}_{\pi(t)}$ is constant for all $t$.
Consequently,
\begin{equation*}
\Norm{\dot\pi(1/2)}_{q_{\mathrm{mid}}}=
\int_0^1 \Norm{\dot{\pi}(t)}_{\pi(t)}\,dt=
d_{\M}(q_x, q_y)
\tag*{\qed}
\end{equation*}
\end{proof}

In practice, computing exponential and logarithmic maps requires solving a geodesic boundary value problem, which is typically too computationally expensive for use in sampling-based motion planners.
In this work, we propose replacing these operations with \textit{retractions}, which are first-order approximations of the exponential map, and their local inverses, which serve as computationally less expensive approximations of the logarithmic map.

\begin{definition}[Retraction]
\label{def:retractions}
A retraction on a Riemannian manifold $\M$ is a smooth map
$\R : \TM \rightarrow \M; \, (q, v) \mapsto \R_{q}(v)$
from the tangent bundle $\TM$ onto $\M$ such that its restriction $\R_{q} : \TqM \rightarrow \M$ to the tangent space at $q$ satisfies $\R_{q}(0) = q$ and the identity map $\D \R_{q}(0)[v] = v$.
\end{definition}
Since the differential $\D \R_{q}(0)$ is nonsingular, the inverse function theorem guarantees that $\R_{q}$ is a local diffeomorphism around the origin.
Consequently, it admits a local inverse $\mathcal{R}_q^{-1}$ defined in a neighbourhood of $q$.
Leveraging these operators, we formulate an approximation of the midpoint distance in \eqref{eqn:midpoint-distance} by substituting the $\exp$ and $\log$ maps with their retraction counterparts,
\begin{equation}
\label{eqn:approx-midpoint-distance}
\hat{d}_{\mathcal{M}}(q_x, q_y) = \left\| \R^{-1}_{\hat{q}_{\mathrm{mid}}}(q_y) - \R^{-1}_{\hat{q}_{\mathrm{mid}}}(q_x) \right\|_{\hat{q}_{\text{mid}}},
\end{equation}
where the retraction midpoint $\hat{q}_{\text{mid}}$ can be constructed as
\begin{equation}
\label{eqn:retraction-midpoint}
\hat{q}_{\text{mid}} = \R_{q_x} \left(\frac{1}{2} \R^{-1}_{q_x} \left(q_y \right) \right).
\end{equation}

The primary advantage of the construction in~\eqref{eqn:approx-midpoint-distance} is its higher-order local accuracy relative to endpoint-based retraction distances, analogous to the improved accuracy of central finite differences over forward or backward differences in numerical analysis.
Although the retraction midpoint only approximates the true geodesic midpoint, it is accurate enough that the first- and second-order distortion terms introduced by first-order retractions cancel in the final difference, effectively yielding an extra order of accuracy for free.
Theorem~\ref{thm:accuracy} formalizes this argument, establishing that the leading term in the Taylor expansion of the approximation error is third order in the separation between configurations.

\begin{theorem}
\label{thm:accuracy}
Let $\mathcal{M}$ be a smooth Riemannian manifold and $U \subseteq \mathcal{M}$ be a compact subset.
For any configurations $q_x, q_y \in U$, the midpoint retraction distance approximates the true Riemannian distance with cubic accuracy, that is, the approximation error satisfies
\begin{equation*}
\left| \hat{d}_{\mathcal{M}}(q_x, q_y) - d_{\mathcal{M}}(q_x, q_y) \right| = \mathcal{O}\left(d_{\mathcal{M}}(q_x, q_y)^3\right).
\end{equation*}
\end{theorem}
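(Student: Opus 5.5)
The plan is to compare the approximate quantity \eqref{eqn:approx-midpoint-distance} with the exact identity \eqref{eqn:midpoint-distance} from the preceding lemma. We do this by expanding the inverse retraction in terms of the logarithmic map and tracking how far $\hat q_{\mathrm{mid}}$ lies from $q_{\mathrm{mid}}$. Throughout, write $h = d_{\M}(q_x,q_y)$. We may assume $h$ is small, since for $h$ bounded below the claim is trivial on the compact set $U$ (both quantities are bounded there). All Taylor remainders will be controlled uniformly by compactness of $U$, or more precisely of a compact neighbourhood of $U$ on which $\R$, $\R^{-1}$, $\exp$ and $\log$ are smooth and defined.

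\textbf{Step 1: second-order expansion of the inverse retraction.} Since $\R_p(0)=p$ and $\D\R_p(0)=\mathrm{id}$, and $\exp_p$ has the same first-order jet, the map $v\mapsto \R_p^{-1}(\exp_p(v))$ equals $v + B_p(v,v) + \mathcal{O}(\|v\|^3)$. Here $B_p$ is a symmetric bilinear map depending smoothly on $p$. Hence
\[
\R_p^{-1}(z) = \log_p(z) + B_p\bigl(\log_p z,\log_p z\bigr) + \mathcal{O}\bigl(d_{\M}(p,z)^3\bigr).
\]
The analogous expansion for $\R_p$ gives
\[
\hat q_{\mathrm{mid}} = \exp_{q_x}\!\bigl(\tfrac12\log_{q_x}q_y + \mathcal{O}(h^2)\bigr),
\]
and therefore $d_{\M}(\hat q_{\mathrm{mid}},q_{\mathrm{mid}}) = \mathcal{O}(h^2)$. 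The factor $\tfrac12$ does not cancel the quadratic term, so $\mathcal{O}(h^2)$ is all we get here.

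\textbf{Step 2: cancellation of the quadratic terms.} Set $m=\hat q_{\mathrm{mid}}$, $a=\log_m q_y$ and $b=\log_m q_x$, so that $\|a\|,\|b\|=\mathcal{O}(h)$. By Step 1,
\[
\R_m^{-1}(q_y)-\R_m^{-1}(q_x) = (a-b) + B_m(a,a)-B_m(b,b) + \mathcal{O}(h^3).
\]
By symmetry of $B_m$,
\[
B_m(a,a)-B_m(b,b) = B_m(a+b,\,a-b).
\]
At the true midpoint we would have $a+b=0$. Since $m$ is within $\mathcal{O}(h^2)$ of $q_{\mathrm{mid}}$ and $\log$ is smooth in its base point, $a+b=\mathcal{O}(h^2)$, and so this term is $\mathcal{O}(h^3)$. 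Consequently
\[
\hat d_{\M}(q_x,q_y) = \|a-b\|_m + \mathcal{O}(h^3).
\]

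\textbf{Step 3: insensitivity of the exact midpoint formula to the base point.} It remains to show that $\|\log_m q_y-\log_m q_x\|_m = h+\mathcal{O}(h^3)$ for any $m$ with $d_{\M}(m,q_{\mathrm{mid}})=\delta=\mathcal{O}(h^2)$. The preceding lemma gives exactly $h$ at $m=q_{\mathrm{mid}}$. I will work in Riemannian normal coordinates centred at $q_{\mathrm{mid}}$, where $G=I+\mathcal{O}(|x|^2)$ and the Christoffel symbols vanish at the origin. In these coordinates the geodesic equation \eqref{eqn:geodesic_equation} gives
\[
\log_m(z) = (z-m) + \mathcal{O}\bigl(\delta\,|z-m|^2 + |z-m|^3\bigr).
\]
Subtracting the expressions for $z=q_y$ and $z=q_x$, the leading terms give $q_y-q_x$, which is independent of $m$. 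The errors are $\mathcal{O}(h^2\cdot h^2 + h^3)$. The norm at $m$ differs from the Euclidean norm by a factor $1+\mathcal{O}(\delta^2)$. Applying the same expansion at $m=q_{\mathrm{mid}}$ itself and comparing the two yields the claim.

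\textbf{Main obstacle.} I expect Step 3 to be the hardest, specifically justifying the expansion of $\log_m$ for a base point near, but not at, the normal-coordinate centre, with the $\delta|z-m|^2$ error term and uniform constants. I would first try to derive it by expanding the geodesic ODE to second order with $\Gamma(m)=\mathcal{O}(\delta)$. If that proves messy, the fallback is to bound the derivative of $m\mapsto\|\log_m q_y-\log_m q_x\|_m$ by $\mathcal{O}(h)$ near $q_{\mathrm{mid}}$. Multiplied by $\delta=\mathcal{O}(h^2)$, this again gives the required $\mathcal{O}(h^3)$. Combining Steps 2 and 3 then proves Theorem~\ref{thm:accuracy}.
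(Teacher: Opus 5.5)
Your proof is correct, but it is organized differently from the paper's.

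\textbf{The paper's route.} The paper fixes Riemann normal coordinates at the true midpoint, so that $q_x=-u$ and $q_y=u$. It shows that the retraction midpoint has coordinate $\delta=\mathcal{O}(h^2)$, which is your Step~1. It then expands the coordinate retraction jointly in base point and direction, $\R(z,\zeta)=z+\zeta+\mathcal{Q}(\zeta,\zeta)+\mathcal{B}(z,\zeta)+\mathcal{O}(\Norm{(z,\zeta)}^3)$, and solves $\R(\delta,w_y)=u$ and $\R(\delta,w_x)=-u$ directly. The $\pm u$ parity cancels $\delta$ and $\mathcal{Q}(u,u)$ in $w_y-w_x$. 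The cross term $\mathcal{B}(\delta,u)$ is $\mathcal{O}(h^3)$, and $G(\delta)=I+\mathcal{O}(\Norm{\delta}^2)$ finishes the argument. The paper never compares against $\log$ at the perturbed point.

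\textbf{Your route.} You first reduce a general retraction to the exponential map at the same base point $m=\hat q_{\mathrm{mid}}$. Your polarization identity $B_m(a,a)-B_m(b,b)=B_m(a+b,a-b)$, with $a+b=\mathcal{O}(h^2)$, is the intrinsic counterpart of the paper's parity cancellation. Your Step~3 is then essentially the paper's normal-coordinate computation with $\R=\exp$.

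\textbf{What each route buys.} Your route costs an extra step, but the part you flagged as the main obstacle is routine. In normal coordinates $\exp_m(v)=m+v-\tfrac12\Gamma(m)(v,v)+\mathcal{O}(|v|^3)$, with $\Gamma(m)=\mathcal{O}(\delta)$ uniformly, and this inverts to exactly your expansion of $\log_m$. In exchange, your decomposition shows precisely where the midpoint is needed. Step~3 actually holds for any $m$ with $\delta=\mathcal{O}(h)$, since then $\delta h^2=\mathcal{O}(h^3)$ and $G(m)=I+\mathcal{O}(h^2)$. This is consistent with the $\log$-difference being exact even at $m=q_x$. So the midpoint's only role is to make $a+b$ small and kill the retraction's quadratic distortion in Step~2.

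\textbf{A minor caveat.} Your opening remark that large $h$ is trivial presumes $\hat d_{\M}$ is defined and bounded on all of $U\times U$. A merely locally invertible retraction does not guarantee this. Like the paper, you are really proving the statement for $h$ below a uniform threshold.
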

\begin{proof}
The result follows from the Taylor expansion of the metric tensor in Riemann normal coordinates centred at the midpoint; see Appendix~\ref{sec:appendix:proofs}. \qed
\end{proof}

\subsection{Vertex Expansion}
\label{sec:vertex-expansion}

Sampling-based motion planning methods typically require a fast and efficient
procedure for connecting two configurations, a process often referred to as
local planning.
In the case of RRTs, for example, this operation attempts to connect the nearest
neighbour to a randomly sampled configuration using an edge of bounded length,
thereby generating a new candidate configuration.
In Euclidean spaces, such connections are typically implemented using straight-line interpolation.
On curved configuration spaces, however, this task requires finding geodesics
that locally minimize distance while respecting the intrinsic geometry of the
manifold.

\looseness=-1
At a high level, our method performs this interpolation by constructing a discrete approximation to a geodesic that follows the natural gradient of the squared Riemannian distance potential.
Rather than solving for a continuous curve in closed form, we generate a sequence of configurations by iteratively descending along this gradient.
Such descent requires moving around the manifold along a specified direction, for which we make use of retractions (Definition~\ref{def:retractions}).
A retraction maps a configuration $q \in \M$ and a tangent vector $v \in \TqM$ to a new point on the manifold, thereby allowing local movement while remaining on $\M$.\footnote{For example, on a linear manifold, the mapping $\R_q(v) = q + v$ is a valid retraction.}
We next formalize the notion of gradients on a Riemannian manifold (Definition~\ref{def:riemannian-gradient}).

\begin{definition}[Riemannian gradient]
\label{def:riemannian-gradient}
Let $\phi : \M \rightarrow \mathbb{R}$ be a smooth function on a Riemannian manifold $\M$. The Riemannian gradient of $\phi$ is the vector field $\grad \phi$ on $\M$ uniquely defined by
\begin{equation*}
    \D \phi(q)[v] = \left\langle v, \grad \phi(q) \right\rangle_{q} \quad \forall v \in \TqM,
\end{equation*}where $\left\langle \cdot, \cdot \right\rangle_{q}$ is the inner product induced by the Riemannian metric $G$ on $\TqM$ and $\D \phi(q)[v]$ is the differential of $\phi$ at $q$ along $v$.
\end{definition}
In practice, the Riemannian gradient of $\phi$ can be computed by evaluating a classical gradient at the origin of the linear tangent model provided by the retraction.
Specifically, for any retraction $\R$ on $\M$,
\begin{equation*}
\grad \phi(q) = \grad(\phi \circ \R_{q})(0),
\end{equation*}where $\phi \circ \R_{q} : \TqM \rightarrow \mathbb{R}$ is a smooth function defined on the linear space $\TqM$ endowed with the inner product $\langle \cdot, \cdot \rangle_{q}$ \cite{boumal2023introduction}.
\looseness=-1
Equivalently, in local coordinates,
\begin{equation}\label{eqn:grad-local}
\grad \phi(q) = \Inv{G(q)} \nabla_{u} (\phi \circ \R_{q})(0)
\end{equation}
where $u\in\mathbb{R}^n$ are coordinates on $\TqM$ and $\nabla_{u}$ denotes the usual Euclidean gradient on $\TqM$ expressed in any basis.
Using \eqref{eqn:grad-local}, the \textit{Riemannian natural gradient} direction at iteration $k$ is $v_k = \grad \phi(q_k)$.
An update rule for taking a single discrete step along the direction of steepest descent on $\M$ can then be written as
\begin{equation*}
    q_{k+1} = \R_{q_k} \left( - s_k \, \hat{v}_k \right), \quad \hat{v}_k = \frac{v_k}{\|v_k\|_{q_k}}
\end{equation*}
with a step length $s_k > 0$, which may be chosen by any backtracking test.
Here, we normalize $v_k$ to ensure the step size $s_k$ approximates the arc length along the retraction curve, and use the distance function in (\ref{eqn:approx-midpoint-distance}) as our squared cost potential.
Specifically, for any fixed configuration, say $q^\dagger$, we define the function
\begin{equation}\label{eqn:potential}
\phi(q) = \frac{1}{2}\, \hat{d}_{\M}(q, q^\dagger)^2
\end{equation} and compute the gradient of (\ref{eqn:potential}) in local coordinates using (\ref{eqn:grad-local}).

\begin{algorithm}[!tb]
\caption{Expansion $( \qnear, \qrand \in \mathcal{M} )$}\label{algo:expansion}
$q \gets q_{\textrm{near}}$, $d \gets 0$\;

\Repeat{$ \hat{d}_{\M}\,(q, \qrand) \leq s $}{

$u \gets \Inv{\R}_{q}(\qrand)$\label{algo1:line3}\;
$v \gets {\Inv{G(q)}} \nabla_u \, \left(\phi \circ \R_{q}\right) \left( 0 \right) $\label{algo1:line4}\;

$q_{\textrm{next}} \gets \R_{q}\left(- s \, \hat{v}\right)$\label{algo1:line6}\;

\If{$ \hat{d}_{\M}\,(q, q_{\textrm{next}}) > \lambda \, s $\label{algo1:line7}}{
    $s \gets \frac{1}{2}\, s$\;
    \If{$s < s_{\textrm{min}}$}{\Break}
    \Continue\label{algo1:line11}
}

$d \gets d + \hat{d}_{\M}\,(q, q_{\textrm{next}})$\label{algo1:line12}\;
\If{$d > d_{\textrm{max}}$}{\Break\label{algo1:line14}}

$q \gets q_{\textrm{next}}$\label{algo1:line15}\;
}

\Return{q}\;
\end{algorithm}

\looseness=-1
The complete vertex expansion procedure is presented in Algorithm~\ref{algo:expansion}.
The algorithm performs the expansion process by repeatedly taking small retraction steps on the manifold starting from the nearest configuration $\qnear$ until the extended branch is close to the random configuration $\qrand$.
At each iteration, it maps $\qrand$ to the current tangent space at $q$ via the inverse retraction, yielding a tangent vector $u \in \TqM$ expressed in local coordinates (Algorithm~\ref{algo:expansion}, Line~\ref{algo1:line3}).
It then evaluates the gradient of~(\ref{eqn:potential}) in the current tangent space and computes the negative natural-gradient direction $v$ with respect to the Riemannian metric (Algorithm~\ref{algo:expansion}, Line~\ref{algo1:line4}).
Taking a step size $s$ along the normalized direction $\hat{v}$ using retraction yields a new configuration $q_{\text{next}}$ (Algorithm~\ref{algo:expansion}, Line~\ref{algo1:line6}).
If the resulting displacement exceeds a threshold proportional to the step size, specifically a factor $\lambda s$, the step length is halved (Algorithm~\ref{algo:expansion}, Lines~\ref{algo1:line7}--\ref{algo1:line11}).
We also track the cumulative distance traveled, denoted $d$, and enforce an upper bound $d_{\text{max}}$ to avoid unnecessarily long expansions, which may otherwise arise due to manifold curvature or variations in the Riemannian metric (Algorithm~\ref{algo:expansion}, Lines~\ref{algo1:line12}--\ref{algo1:line14}).
If none of the stopping conditions is triggered, the step is accepted and the current configuration is updated accordingly (Algorithm~\ref{algo:expansion}, Line~\ref{algo1:line15}).
\section{Experiments}
\label{sec:results}

We validate our proposed approach through two use cases to demonstrate the versatility of geometry-aware sampling-based planners for robot motion planning problems.
First, we evaluate the planner's ability to find minimum-energy motions for serial-link manipulators (Section~\ref{sec:exp-manipulators}).
Second, we apply the method to rigid-body planning on $\LieGroupSE{2}$ under nonholonomic constraints (Section~\ref{sec:exp-se2-planning}).
We compare against standard numerical optimization baselines, including a boundary value problem (BVP) solver and a variational energy minimization method, as well as a sampling-based planner using a Euclidean metric.
To enforce obstacle avoidance in the optimization-based methods, we reshape the metric using standard exponential barrier functions and tune the corresponding parameters to achieve the best performance possible.
To account for the sensitivity of variational solvers to initial conditions and the stochasticity of sampling-based methods, we conduct all experiments over $50$ 
trials with start and goal configurations perturbed by Gaussian noise, running each sampling-based planner $10$ times per trial.
In all experiments, we use the Open Motion Planning Library (OMPL)~\cite{sucan2012open} implementation of RRT* as the underlying sampling-based planner.\footnote{Code is publicly available at \url{https://github.com/utiasSTARS/geodex}.}
For geodesic computation with the variational solvers, we use the StochMan Python library~\cite{software:stochman} to represent geodesics as cubic splines with a fixed number of control points and optimize them to minimize the Riemannian energy functional, following~\cite{beik2023reactive, klein2023design}.

Throughout our analysis, we evaluate path quality using \textit{geodesic length} and \textit{energy}, both measured under the Riemannian metric specific to each experiment.
To ensure consistent reporting, we reparameterize all solution paths to unit-speed curves prior to evaluation and then report the geometric length (which is invariant to parameterization) and the Dirichlet energy functional defined in \eqref{eqn:energy}, which quantifies the smoothness of the geodesic.

\subsection{Serial Link Manipulators}
\label{sec:exp-manipulators}

\looseness=-1
This section presents experimental results assessing the effectiveness of our approach for motion planning with configuration-dependent Riemannian metrics for serial manipulators.
In all experiments, we define the Riemannian metric using the manipulator's mass-inertia matrix~\cite{jaquier2022riemannian}, computed via the composite rigid body algorithm implemented in the Pinocchio library~\cite{carpentier2019pinocchio}.
Under this metric, geodesics correspond to motions that minimize kinetic energy for a given traversal time.

\begin{figure}[!t]
\centering
\vspace{-0.5\baselineskip}
\includegraphics[width=0.9\textwidth]{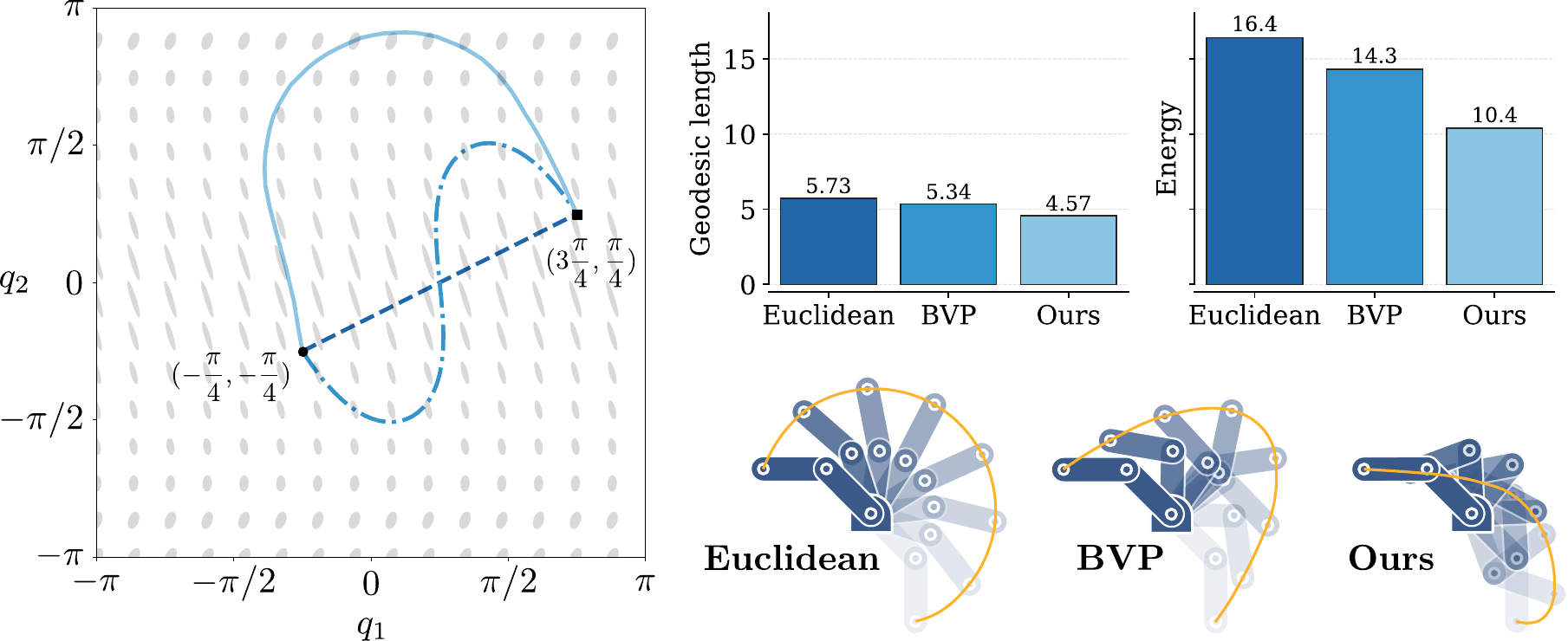}
\vspace{-0.5\baselineskip}
\caption{
\looseness=-1
Geodesics found by various motion planning methods for the $2$-DoF planar manipulator experiment in Section~\ref{sec:exp-manipulators}.
Configuration space paths from start ($\bullet$) and goal (\scalebox{0.6}{$\blacksquare$}) are shown (left), with shaded regions indicating kinetic-energy ellipsoids at different configurations.
Corresponding task space motions with end-effector trajectories are shown in yellow (right).
The Euclidean approach minimizes joint space shortest distance, yielding straight-line geodesics (\inlinedashedblue).
Numerical methods optimize the energy functional, producing lower-cost paths (\inlinedasheddottedblue) but often converge to local minima.
Our geometry-aware sampling-based approach recovers globally optimal geodesics under the intrinsic kinetic-energy Riemannian metric (\inlinesolidblue), achieving lower geodesic length and energy.
}
\vspace{-1.35\baselineskip}
\label{fig:two-link-planar-arm}
\end{figure}

\subsubsection{Two-Link Planar Arm}
We first consider a simple 2-DoF planar manipulator in an obstacle-free environment.
The arm consists of two identical links, each with a length of $1.0$ m and a mass of $1.0$ kg\footnote{Links are modelled as uniform slender rods with centres of mass at the midpoints.}.
The task is to plan a motion from an initial configuration of $\smash{q_{\textrm{start}}=\Transpose{[-\pi/4, -\pi/4]}}$ to a goal configuration of $\smash{q_{\textrm{goal}}=\Transpose{[3\pi/4, 3\pi/4]}}$.
Unlike the Euclidean baseline, which yields a straight-line path in configuration space, our approach correctly recovers the curved Riemannian geodesic (Figure~\ref{fig:two-link-planar-arm}).
Even with identical link masses, the effective inertia at the base joint is higher than at the elbow joint; our planner naturally captures this property, generating trajectories that minimize the mechanical work.
We also observe that classical numerical baselines (both BVP and variational solvers) are highly sensitive to the initial guess; when initialized with a standard straight-line path, they frequently converge to suboptimal solutions.
In contrast, our geometry-aware sampling-based planner consistently recovers the globally optimal geodesic without requiring any prior knowledge of the solution geometry, successfully navigating the nonlinear energy landscape where optimization methods struggle.
This behavior highlights the strong dependence of classical numerical methods on initialization in highly nonconvex energy landscapes.
By contrast, the sampling-based formulation instead leverages geometric structure to explore multiple homotopy classes, enabling reliable convergence to the global geodesic without requiring problem-specific initialization.

\subsubsection{7-DoF Franka}
To evaluate scalability in high-dimensional spaces, we apply our planner to a 7-DoF Franka Emika robot operating in a cluttered environment.
We utilize the \textit{table pick} environment from the MotionBenchMaker dataset~\cite{chamzas2021motionbenchmaker}, where the robot must navigate from a start configuration to a grasp pose while avoiding collisions with the table and surrounding obstacles (Figure~\ref{fig:franka}).
Because of the dimensionality and complexity of this problem, we exclude the BVP solver from the set of baselines due to its poor scaling behavior.
Table~\ref{tab:results} summarizes the results over 50 trials.
From the data, we observe that all methods achieved high success rates in this setting.
Since we employ a single-tree RRT* as the underlying sampling-based planner, some trials fail to reach the goal due to sampling stochasticity; this limitation could be mitigated by adopting a bidirectional search strategy~\cite{kuffner2000rrt}, for example.
While the Euclidean approach reliably finds feasible paths, it does not account for the robot's configuration-dependent inertia, often producing high-energy motions that unnecessarily excite the heavy base joints.
Conversely, although the variational solver explicitly minimizes energy, the reshaping of the metric induces a complex, non-convex landscape, making the method sensitive to local minima and requiring careful tuning of barrier parameters.
In contrast, our geometry-aware planner consistently produces lower kinetic-energy paths than both baselines, effectively handling obstacle avoidance constraints implicitly without the need for explicit metric design.

\begin{table}[!t]
\centering
\vspace{-0.9\baselineskip}
\caption{
\looseness=-1
Performance statistics over 50 trials for the 7-DoF Franka arm and anisotropic $\LieGroupSE{2}$ planning tasks described in Sections~\ref{sec:exp-manipulators} and \ref{sec:exp-se2-planning}.
We report median values for geodesic length and energy measured under the corresponding Riemannian metrics; success denotes the fraction of trials that produced a collision-free path.
While timing information is not shown here, achieving the indicated performance requires at least 2 to 3 minutes for variational methods, whereas our approach is limited to just one minute.
}
\medskip
\label{tab:results}
\renewcommand{\arraystretch}{1.3}
\setlength{\tabcolsep}{6pt}
\resizebox{0.85\columnwidth}{!}{%
\renewrobustcmd{\bfseries}{\fontseries{b}\selectfont}
\renewrobustcmd{\boldmath}{}
\begin{tabular}{
    lr | 
    S[table-format=2.1(2),detect-weight,detect-display-math,mode=text]
    S[table-format=4.1(4),detect-weight,mode=text]
    S[table-format=2.0,detect-weight,mode=text]<{\%}
    }
    \toprule
    & & {Length ($\downarrow$)} & {Energy ($\downarrow$)} & \multicolumn{1}{c}{Success ($\uparrow$)} \\
    \midrule
    
    % --- Franka ---
    & Variational & 2.5(0.6) & 3.1(1.5) & \bfseries 96 \\
    & Sampling (Euclidean) & 2.6(0.5) & 3.5(1.5) & 85 \\
    \multirow{-3}{*}{\textbf{Franka}} 
    & Sampling (Ours) & \bfseries 2.1(0.2) & \bfseries 2.3(0.4) & 90 \\
    
    \midrule
    \midrule
    
    % --- SE(2) Doorway ---
    & Variational & 24.9(0.6) & 310.4(14.0) & 86 \\
    & Sampling (Euclidean)   & 43.7(3.7) & 954.4(174.7) & \bfseries 100 \\
    \multirow{-3}{*}{$\boldsymbol{\LieGroupSE{2}}$ \textbf{Doorway}} 
    & Sampling (Ours) & \bfseries 23.2(0.5) & \bfseries 269.1(11.7) & \bfseries 100 \\
    
    \midrule
    
    % --- SE(2) Corridor ---
    & Variational & {$\infty$} & {$\infty$} & 8 \\
    & Sampling (Euclidean)   & 95.6(9.2) & 4571.2(873.0) & \bfseries 100 \\
    \multirow{-3}{*}{$\boldsymbol{\LieGroupSE{2}}$ \textbf{Corridor}}
    & Sampling (Ours) & \bfseries 43.0(0.5) & \bfseries 925.9(22.5) & \bfseries 100 \\
    
    \bottomrule
\end{tabular}
}
\vspace{-1\baselineskip}
\end{table}

\subsection{Planning on $\LieGroupSE{2}$}
\label{sec:exp-se2-planning}

\looseness=-1
We next apply our framework to rigid-body planning on the special Euclidean group $\LieGroupSE{2}$ to demonstrate how metric design can enforce specific kinematic behaviors without explicitly encoding them as nonholonomic constraints, as is common in kinodynamic motion planners~\cite{donald1993kinodynamic,laumond2002motion}.
We evaluate our approach on a large-scale navigation environment from the literature, the Willow Garage map~\cite{quigley2009ros}, considering two distinct scenarios, \textit{Doorway} and \textit{Corridor} (Figure~\ref{fig:se2-path}).
To examine the effect of metric shaping, we employ left-invariant Riemannian metrics defined by a diagonal weight matrix in the body frame,
$G=\text{diag}(w_x, w_y, w_{\theta})$, which assigns independent costs to longitudinal translation, lateral translation, and rotation~\cite{belta2002euclidean}.
In this experiment, we penalize lateral translation by making the metric highly anisotropic (i.e., $w_y \gg w_x$), effectively creating a `soft' nonholonomic constraint.

As reported in Table~\ref{tab:results}, both the Euclidean baseline and our proposed method achieve a 100$\%$ success rate in finding collision-free paths within the available planning time.
The variational method, however, often struggles with the complex geometry of the environment, particularly in the \textit{Corridor} scenario, where it attains only an $8\%$ success rate.
This behavior reflects the sensitivity of variational solvers to initialization and the difficulty of tuning barrier-function weights to navigate narrow passages without becoming trapped in local minima or violating collision constraints.
Although the Euclidean planner reliably finds feasible paths, the quality of its solutions is poor, incurring substantially higher geodesic length and energy costs compared to our method.
As shown in Figure~\ref{fig:se2-path}, the geodesics found by the Euclidean planner ignore the anisotropic nature of the underlying metric, resulting in unnatural skidding or screw motions where the rigid body translates laterally.
Conversely, our geometry-aware planner naturally recovers the soft nonholonomic behavior implied by the intrinsic Riemannian metric, aligning the orientation with the direction of travel to minimize energy and producing significantly shorter geodesics than the baselines.

\begin{figure}[!tb]
\centering
\vspace{-0.5\baselineskip}
\includegraphics[width=0.78\textwidth]{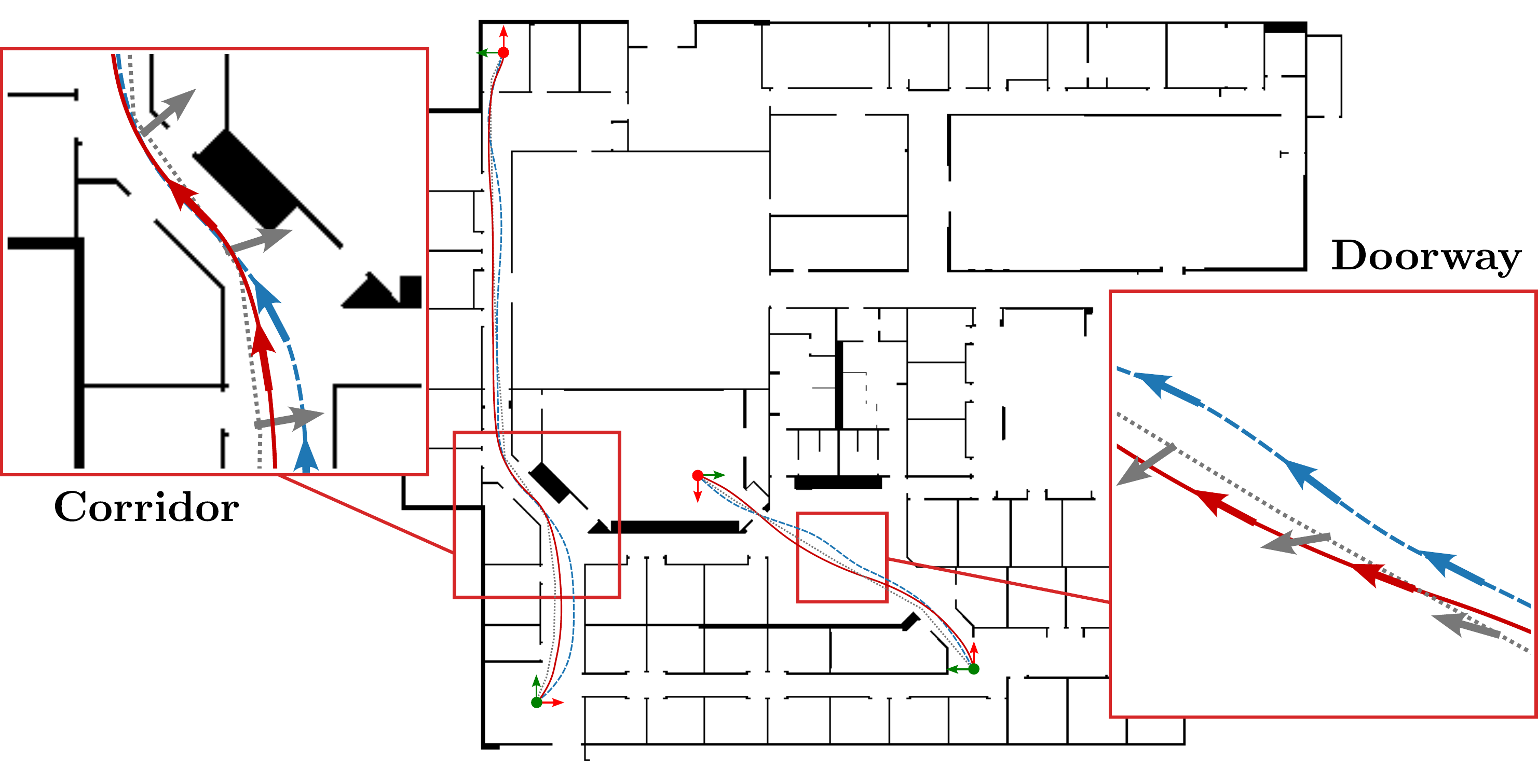}
\vspace{-0.8\baselineskip}
\caption{
\looseness=-1
Comparison of collision-free geodesics produced by the benchmarked methods in the Willow Garage environment for the $\LieGroupSE{2}$ planning experiment described in Section~\ref{sec:exp-se2-planning}.
The proposed method (\inlinesolidred) is compared against the variational method (\inlinevariationalblue) and a sampling-based planner using a Euclidean metric (\inlinedottedgray).
Arrows along each path represent the $\LieGroupSE{2}$ pose at discrete intervals; for visual clarity, only the body-frame $x$-axis is shown to illustrate orientation along the trajectory.
Start and goal configurations are indicated by green and red dots, respectively, with their full orientation frames.
}
\vspace{-1.35\baselineskip}
\label{fig:se2-path}
\end{figure}
\section{Conclusion}
\label{sec:conclusion}

This work presents a geometry-aware sampling-based planning framework for robot configuration spaces equipped with a Riemannian metric.
Our main contribution is a midpoint-based distance approximation on Riemannian manifolds that can be evaluated using only retractions and local metric information, yet matches the true Riemannian distance with third-order accuracy.
This approximation makes the distance-based subroutines of sampling-based planners more consistent with the intrinsic motion costs than those of Euclidean baselines, while avoiding the expense of solving geodesic boundary-value problems.
We further show how the same ingredients enable a geometry-aware local interpolation approach based on discrete retraction steps and Riemannian natural gradients.
Across manipulation and $\LieGroupSE{2}$ problems under anisotropic metrics, our method consistently produces higher-quality solutions under the target metric, especially in settings where Euclidean or isotropic assumptions are misleading.
Future work will explore the full implications of our approach.
In particular, we plan to investigate the design of heuristic functions in curved spaces to focus search on promising regions of the configuration space.
We are also interested in studying tighter theoretical guarantees when our proposed geometry-aware subroutines are used within asymptotically optimal planners.

\appendix
\section{Proofs}
\label{sec:appendix:proofs}

\looseness=-1
We first define the notation and assumptions used throughout.
Let $q_x, q_y \in \M$ be two configurations within a geodesically convex neighbourhood.
Then, there exists a unique minimizing geodesic $\pi : [0,1] \to \M$ with $q_x = \pi(0)$ and $q_y = \pi(1)$.
We denote the geodesic midpoint by $q_{\mathrm{mid}} = \pi(1/2)$ and the distance by $h = d_{\M}(q_x, q_y)$.
We define $u = \log_{q_{\mathrm{mid}}}(q_y) \in \mathcal{T}_{q_{\mathrm{mid}}}\M$ as the tangent vector at $q_{\mathrm{mid}}$ pointing toward $q_y$.
Consequently, $\Norm{u}_{q_{\mathrm{mid}}} = d_{\M}(q_{\mathrm{mid}}, q_y) = h/2$.
By the symmetry of the minimizing geodesic, we have $\log_{q_{\mathrm{mid}}}(q_x) = -u$.
For convenience, we utilize Riemann normal coordinates centred at $q_{\mathrm{mid}}$.
In these coordinates, the metric tensor at the origin is the identity matrix $I$, and its first-order partial derivatives vanish.
Accordingly, the coordinates of $q_x$ and $q_y$ are given by $-u$ and $u$, respectively.

\begin{lemma}
\label{lemma:retraction-midpoint}
Let $\delta \in \mathbb{R}^n$ be the coordinate representation of the retraction midpoint $\hat{q}_{\mathrm{mid}}$.
Then $\delta$ approximates the origin with second-order accuracy.
\end{lemma}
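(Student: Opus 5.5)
Throughout we work in the Riemann normal coordinates centred at $q_{\mathrm{mid}}$, where $q_x$ and $q_y$ have coordinates $-u$ and $u$ with $\Norm{u} = h/2$. The claim to prove is $\delta = \mathcal{O}(h^2)$. The strategy is to Taylor expand the retraction and its local inverse around $q_x$. Only the first-order defining property $\D\R_{q}(0) = \mathrm{id}$ of Definition~\ref{def:retractions} is used. Constants are made uniform by smoothness of $\R$ on the tangent bundle over the compact set $U$ (or a compact neighbourhood of it).

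\textbf{Step 1: local expansion of the retraction.} In the fixed chart, $\R_q(v)$ is a smooth function of $(q,v)$ with $\R_q(0)=q$ and $\partial_v \R_q(0) = I$. Taylor's theorem in $v$ gives
\begin{equation*}
\R_q(v) = q + v + B_q(v,v) + \mathcal{O}(\Norm{v}^3),
\end{equation*}
where $B_q$ is a smooth symmetric bilinear form. The remainder is uniform for $q$ in the compact set and $\Norm{v}$ small.

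\textbf{Step 2: invert the expansion.} Inverting the series (inverse function theorem plus fixed-point iteration) gives
\begin{equation*}
\R_q^{-1}(p) = (p-q) - B_q(p-q,p-q) + \mathcal{O}(\Norm{p-q}^3).
\end{equation*}

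\textbf{Step 3: compose at $q = q_x = -u$ and $p = q_y = u$.} Here $p - q = 2u$, so
\begin{equation*}
w := \R^{-1}_{q_x}(q_y) = 2u - 4B_{q_x}(u,u) + \mathcal{O}(h^3).
\end{equation*}
Then $\tfrac12 w = u - 2B_{q_x}(u,u) + \mathcal{O}(h^3)$, and applying Step 1 gives
\begin{equation*}
\delta = \R_{q_x}\!\left(\tfrac12 w\right) = -u + u - 2B(u,u) + B(u,u) + \mathcal{O}(h^3) = -B_{q_x}(u,u) + \mathcal{O}(h^3).
\end{equation*}
Since $B$ is bounded uniformly, $\Norm{\delta} \le C\,\Norm{u}^2 + \mathcal{O}(h^3) = \mathcal{O}(h^2)$, which is the claimed second-order accuracy.

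\textbf{Uniformity.} Since everything is a straightforward expansion, the main point requiring care is that the constants do not depend on the chosen midpoint. The coordinates are normal coordinates centred at a varying point $q_{\mathrm{mid}}$, so $B$ and the remainder terms depend on that chart. I would handle this by noting that normal coordinates depend smoothly on the centre, via the exponential map on the tangent bundle. On a compact set with a uniform lower bound on the injectivity/convexity radius, all derivatives of $\R$ expressed in these charts are therefore uniformly bounded for $h$ below a fixed threshold. This threshold also guarantees that $\R^{-1}_{q_x}(q_y)$ is defined, since $q_y$ is then within the uniform invertibility radius of $\R_{q_x}$.

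Two remarks about the expansion itself. First, $\delta$ is not $\mathcal{O}(h^3)$ in general. Only the first-order terms cancel, leaving a residual $-B(u,u)$, which is all the lemma asserts. Second, Euclidean coordinate differences are comparable to $h$, because the metric is $I + \mathcal{O}(\Norm{x}^2)$ in normal coordinates. This justifies writing all remainders as powers of $h$.
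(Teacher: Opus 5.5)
Your proof is correct and takes the same route as the paper. Both expand the retraction and its local inverse at $q_x$ in normal coordinates centred at $q_{\mathrm{mid}}$, obtain $\R^{-1}_{q_x}(q_y) = 2u + \mathcal{O}(\Norm{u}^2)$, and observe that the first-order terms cancel in $\R_{q_x}(\tfrac12 \R^{-1}_{q_x}(q_y))$. The paper stops at $\mathcal{O}(\Norm{v}^2)$ remainders, which already suffices; your extra order (the explicit leading term $-B_{q_x}(u,u)$, which shows the bound is sharp) and your discussion of uniform constants across the varying normal charts are refinements the paper leaves implicit.
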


\begin{proof}
\looseness=-1
Since $\R_q(0)=q$ and $\D\R_q(0)=I$, the coordinate expansion is
\begin{equation}
\label{eqn:lemma-retraction}
\R_{q}(v) = q + v + \mathcal{O}(\Norm{v}^2).
\end{equation}
Consequently, the inverse retraction satisfies
\begin{equation}
\label{eqn:lemma-inverse-retraction}
\Inv{\R}_q(p) = (p-q) + \mathcal{O}(\Norm{p-q}^2),
\end{equation}
for any $p \in \M$ sufficiently close to $q$.
Let $v = \Inv{\R}_{q_x}(q_y) \in \mathcal{T}_{q_x}\M$.
Substituting the coordinate representations $q_x = -u$ and $q_y = u$ into \eqref{eqn:lemma-inverse-retraction}, we expand $v$ as
\begin{equation*}
v = \Inv{\R}_{q_x}(q_y) = (q_y - q_x) + \mathcal{O}(\Norm{q_y - q_x}^2) = 2u + \mathcal{O}(\Norm{u}^2).
\end{equation*}
Recall from \eqref{eqn:retraction-midpoint} that the retraction midpoint is defined as $\hat{q}_{\mathrm{mid}} = \R_{q_x}(\tfrac{1}{2} v)$.
Substituting $q_x=-u$ and $v=2u + \mathcal{O}(\Norm{u}^2)$, we expand $\hat{q}_{\mathrm{mid}}$ using \eqref{eqn:lemma-retraction}
\begin{equation*}
\hat{q}_{\mathrm{mid}}
= \R_{q_x}(\tfrac{1}{2} v)
= -u + \left( u + \mathcal{O}(\Norm{u}^2) \right) + \mathcal{O}(\Norm{u}^2)
= \mathcal{O}(\Norm{u}^2).
\end{equation*}
Since $\Norm{\delta} = \mathcal{O}(\Norm{u}^2)$, substituting $\Norm{u} = h/2$ gives $\Norm{\delta} = \mathcal{O}(h^2)$.\qed
\end{proof}

\begin{lemma}
\label{lemma:inverse-retractions-difference}
The difference of inverse retractions at $\hat{q}_{\mathrm{mid}}$ satisfies
\begin{equation*}
\Inv{\R}_{\hat{q}_{\mathrm{mid}}}\left( q_{y} \right) - \Inv{\R}_{\hat{q}_{\mathrm{mid}}}\left( q_{x} \right)
= 2u + \mathcal{O}(\Norm{u}^3).
\end{equation*}
\end{lemma}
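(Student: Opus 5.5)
The idea is to refine the first-order expansions \eqref{eqn:lemma-retraction}--\eqref{eqn:lemma-inverse-retraction} used in Lemma~\ref{lemma:retraction-midpoint} by one more order, so that the quadratic distortion of the inverse retraction appears explicitly. We then exploit the fact that $q_x$ and $q_y$ sit at the antipodal coordinates $-u$ and $u$. Quadratic terms are even under $u \mapsto -u$, so they cancel in the difference. What survives is of order $\Norm{u}\,\Norm{\delta}$, which by Lemma~\ref{lemma:retraction-midpoint} is $\mathcal{O}(\Norm{u}^3)$.

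Concretely, I work throughout in the Riemann normal coordinates centred at $q_{\mathrm{mid}}$, identifying each tangent space with $\mathbb{R}^n$ through the coordinate basis. Since $\R$ is smooth on $\TM$, Taylor's theorem in $v$ gives
\begin{equation*}
\R_q(v) = q + v + Q_q(v,v) + \mathcal{O}(\Norm{v}^3),
\end{equation*}
where $Q_q$ is a symmetric bilinear map depending smoothly on $q$. Here I use $\R_q(0)=q$ and $\D\R_q(0)=I$. Inverting this series, for instance by substituting the ansatz $\Inv{\R}_q(p) = w - Q_q(w,w) + \mathcal{O}(\Norm{w}^3)$ with $w = p-q$, gives
\begin{equation*}
\Inv{\R}_q(p) = (p-q) - Q_q(p-q,p-q) + \mathcal{O}(\Norm{p-q}^3).
\end{equation*}

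Next I apply this expansion at the base point $q = \hat{q}_{\mathrm{mid}} = \delta$ with $p = q_y = u$ and $p = q_x = -u$. Since $\Norm{\delta} = \mathcal{O}(\Norm{u}^2)$, both $\Norm{u-\delta}$ and $\Norm{u+\delta}$ are $\mathcal{O}(\Norm{u})$, and so both remainders are $\mathcal{O}(\Norm{u}^3)$. Subtracting the two expansions, the linear parts give $(u-\delta) - (-u-\delta) = 2u$. The quadratic parts give
\begin{equation*}
-Q_\delta(u-\delta,u-\delta) + Q_\delta(u+\delta,u+\delta) = 4\,Q_\delta(u,\delta),
\end{equation*}
because the even terms $Q_\delta(u,u)$ and $Q_\delta(\delta,\delta)$ cancel. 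By bilinearity and Lemma~\ref{lemma:retraction-midpoint}, $\Norm{Q_\delta(u,\delta)} \le C\,\Norm{u}\,\Norm{\delta} = \mathcal{O}(\Norm{u}^3)$, which yields the claimed identity.

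The step I expect to be the main obstacle is making the $\mathcal{O}(\cdot)$ constants uniform. The expansion of $\Inv{\R}_q$ must hold with a remainder constant, and a bound on $\Norm{Q_q}$, that are uniform over base points $q$ near $q_{\mathrm{mid}}$. This matters because the base point $\delta$ itself moves with $u$. Similarly, the bounds must be uniform over midpoints ranging in the compact set $U$ of Theorem~\ref{thm:accuracy}. I would handle this by taking a compact neighbourhood of $U$ and a uniform radius on which the normal coordinates and the local inverses $\Inv{\R}_q$ are defined, for instance via a uniform version of the inverse function theorem applied to the smooth map $(q,v)\mapsto (q,\R_q(v))$. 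The smoothness of $\R$ then bounds its second and third derivatives on this compact set. This gives $C$ and the remainder constants independent of $q_x$ and $q_y$.
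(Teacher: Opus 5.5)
Your proof is correct and follows essentially the same route as the paper: a second-order Taylor expansion of the retraction in normal coordinates at $q_{\mathrm{mid}}$, cancellation of the even quadratic terms because the targets sit at $\pm u$, and a bound $\mathcal{O}(\Norm{u}\,\Norm{\delta}) = \mathcal{O}(\Norm{u}^3)$ on the surviving cross term via Lemma~\ref{lemma:retraction-midpoint}. The differences are only bookkeeping. You invert the series at a fixed base point with a base-dependent form $Q_q$, so the cross term appears as $4\,Q_\delta(u,\delta)$; the paper instead uses a joint expansion $\R(z,\zeta)$ with a mixed term $\mathcal{B}(z,\zeta)$ and solves for corrections $e_x, e_y$. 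Your attention to uniform constants also addresses a point the paper leaves implicit.
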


\begin{proof}
Let $\R(z,\zeta)$ denote the coordinate representation of the retraction, that is, $\R(z,\zeta)$ is the coordinate of $\R_{\exp_q(z)}(\zeta)$ in the same chart.
Since $\R(0,0)=0$ and $D_\zeta \R(0,0)=I$, a second-order Taylor expansion yields
\begin{equation}
\label{eqn:lemma-local-retraction-expansion}
\R(z, \zeta) = z + \zeta + \mathcal{Q}(\zeta,\zeta) + \mathcal{B}(z,\zeta) + \mathcal{O}(\Norm{(z,\zeta)}^3),
\end{equation}
where $\mathcal{Q}$ is quadratic in $\zeta$ and $\mathcal{B}$ is bilinear in $(z,\zeta)$.
Because $\R_{\hat{q}_{\mathrm{mid}}}$ is a local diffeomorphism near $0\in \mathcal{T}_{\hat{q}_{\mathrm{mid}}}\M$, the inverse-retraction vectors
$w_y = \Inv{\R}_{\hat q_{\mathrm{mid}}}(q_y)$ and $w_x = \Inv{\R}_{\hat q_{\mathrm{mid}}}(q_x)$
are well-defined (for $\Norm{u}$ small) and satisfy
\begin{equation*}
\R(\delta,w_y)=u, \quad \R(\delta,w_x)=-u.
\end{equation*}
We solve for $w_y$ and $w_x$ given base point $z = \delta$ and targets $\pm u$.
Let $w_y = u + e_y$ and $w_x = -u + e_x$, where $e_y,e_x$ are correction terms.
Substituting into \eqref{eqn:lemma-local-retraction-expansion}
\begin{align*}
u &= \R(\delta, u + e_y) = \delta + (u + e_y) + \mathcal{Q}(u, u) + \mathcal{B}(\delta, u) + \mathcal{O}(\Norm{u}^3),\\
-u &= \R(\delta, -u + e_x) = \delta + (-u + e_x) + \mathcal{Q}(-u, -u) + \mathcal{B}(\delta, -u) + \mathcal{O}(\Norm{u}^3).
\end{align*}
Solving for the errors,
\begin{align*}
e_y &= - \delta - \mathcal{Q}(u, u) - \mathcal{B}(\delta, u) + \mathcal{O}(\Norm{u}^3),\\
e_x &= - \delta - \mathcal{Q}(-u, -u) - \mathcal{B}(\delta, -u) + \mathcal{O}(\Norm{u}^3).
\end{align*}
Using the symmetry $\mathcal{Q}(-u, -u) = \mathcal{Q}(u, u)$ and bilinearity $\mathcal{B}(\delta, -u) = -\mathcal{B}(\delta, u)$, subtracting the two expressions cancels out the even terms $-\delta$ and $-\mathcal{Q}$
\begin{equation*}
w_y - w_x = 2u + (e_y - e_x) = 2u + (-2 \mathcal{B}(\delta, u) + \mathcal{O}(\Norm{u}^3)).
\end{equation*}
Since $\mathcal{B}$ is bilinear, and by Lemma~\ref{lemma:retraction-midpoint}, $\Norm{\delta} = \mathcal{O}(\Norm{u}^2)$, it follows that $\mathcal{B}(\delta, u) = \mathcal{O}(\Norm{\delta}\Norm{u}) = \mathcal{O}(\Norm{u}^3)$.
Therefore, we have
\begin{equation*}
w_y - w_x
= \Inv{\R}_{\hat{q}_{\mathrm{mid}}}\left( q_{y} \right) - \Inv{\R}_{\hat{q}_{\mathrm{mid}}}\left( q_{x} \right)
= 2u + \mathcal{O}(\Norm{u}^3).
\tag*{\qed}
\end{equation*}
\end{proof}

\begin{proof}[Theorem~\ref{thm:accuracy}]
In normal coordinates, the metric tensor at $\hat{q}_{\mathrm{mid}}$ expands as
\begin{equation*}
G(\delta) = I + \mathcal{O}(\Norm{\delta}^2).
\end{equation*}
Substituting $\Norm{\delta} = \mathcal{O}(\Norm{u}^2)$ from Lemma~\ref{lemma:retraction-midpoint}, we get $G(\delta) = I + \mathcal{O}(\Norm{u}^2)$.
Let $\Delta w = w_y - w_x$.
Using Lemma~\ref{lemma:inverse-retractions-difference}, we write $\Delta w = 2u + r$, where $r = \mathcal{O}(\Norm{u}^3)$.
The midpoint retraction distance in \eqref{eqn:approx-midpoint-distance} is the norm of $\Delta w$ under $G(\delta)$, which relates to the Euclidean norm $G(0)$ by the metric expansion
\begin{align*}
\hat{d}_{\mathcal{M}}(q_x, q_y) &= \Norm{2u + r}_{G(\delta)}\\
&= \Norm{2u + r}_{G(0)} \left(1 + \mathcal{O}(\Norm{u}^2)\right)\\
&= \left(2\Norm{u} + \mathcal{O}(\Norm{u}^3)\right) \left(1 + \mathcal{O}(\Norm{u}^2)\right)\\
&= 2\Norm{u} + \mathcal{O}(\Norm{u}^3).
\end{align*}
Since $d_{\mathcal{M}}(q_x, q_y) = 2\Norm{u} = h$, we have
\begin{equation*}
\left| \hat{d}_{\mathcal{M}}(q_x, q_y) - d_{\mathcal{M}}(q_x, q_y) \right| = \mathcal{O}(h^3).
\tag*{\qed}
\end{equation*}
\end{proof}

\bibliographystyle{splncs04}
\bibliography{main}

\end{document}